\newtheorem{myprop}{Proposition}
\newtheorem{mytheorem}{Theorem}
\icmltitlerunning{Self-supervised Graph-level Representation Learning with Local and Global Structure}
\begin{document}

\twocolumn[
\icmltitle{Self-supervised Graph-level Representation Learning \\ with Local and Global Structure}



\icmlsetsymbol{equal}{*}

\begin{icmlauthorlist}
\icmlauthor{Minghao Xu}{sjtu}
\icmlauthor{Hang Wang}{sjtu}
\icmlauthor{Bingbing Ni}{sjtu}
\icmlauthor{Hongyu Guo}{nrc}
\icmlauthor{Jian Tang}{mila,cifar,hec}
\end{icmlauthorlist}

\icmlaffiliation{sjtu}{Shanghai Jiao Tong University}
\icmlaffiliation{nrc}{National Research Council Canada}
\icmlaffiliation{mila}{Mila - Quebec AI Institute}
\icmlaffiliation{cifar}{CIFAR AI Research Chair}
\icmlaffiliation{hec}{HEC Montr\'{e}al}

\icmlcorrespondingauthor{Minghao Xu}{xuminghao118@sjtu.edu.cn}
\icmlcorrespondingauthor{Jian Tang}{jian.tang@hec.ca}

\icmlkeywords{Self-supervised Representation Learning, Graph Representation Learning, Hierarchical Semantic Learning}

\vskip 0.3in
]



\printAffiliationsAndNotice{}  


\begin{abstract}

This paper studies unsupervised/self-supervised whole-graph representation learning, which is critical in many tasks such as molecule properties prediction in drug and material discovery. Existing methods mainly focus on preserving the local similarity structure between different graph instances but fail to discover the global semantic structure of the entire data set. In this paper, we propose a unified framework called \textbf{Lo}cal-instance and \textbf{G}lobal-semantic Learning (\emph{GraphLoG}) for self-supervised whole-graph representation learning. Specifically, besides preserving the local similarities, GraphLoG introduces the \emph{hierarchical prototypes} to capture the global semantic clusters. An efficient online expectation-maximization (EM) algorithm is further developed for learning the model. We evaluate GraphLoG by pre-training it on massive unlabeled graphs followed by fine-tuning on downstream tasks. Extensive experiments on both chemical and biological benchmark data sets demonstrate the effectiveness of the proposed approach. 

\end{abstract}


\section{Introduction} \label{sec1}

Learning informative representations of whole graphs is a fundamental problem in a variety of domains and tasks, such as molecule properties prediction in drug and material discovery~\citep{mpnn,moleculenet}, protein function forecast in biological networks~\citep{new_protein,aptrank}, and predicting the properties of circuits in circuit design~\citep{circuit-gnn}. Recently, Graph Neural Networks (GNNs) have attracted a surge of interest and showed the effectiveness in learning graph representations. These methods are usually trained in a supervised fashion, which requires a large number of labeled data. Nevertheless, in many scientific domains, labeled data are very limited and expensive to obtain. Therefore, it is becoming increasingly important to learn the representations of graphs in an unsupervised or self-supervised fashion.

Self-supervised learning has recently achieved profound success for both natural language processing, \emph{e.g.} GPT~\citep{gpt} and BERT~\citep{bert}, and image understanding, \emph{e.g.} MoCo~\citep{moco} and SimCLR~\citep{simclr}. However, how to effectively learn the representations of graphs in a self-supervised way is still an open problem. Intuitively, a desirable graph representation should be able to preserve the \emph{local-instance structure}, so that similar graphs are embedded close to each other and dissimilar ones stay far apart. In addition, the representations of the entire set of graphs should also reflect the \emph{global-semantic structure} of the data, so that the graphs with similar semantic properties are compactly embedded, which is able to benefit various downstream tasks such as graph classification or regression. 
Such global structure can be effectively captured by semantic clusters~\citep{deep_clustering,invariant_information}, which can be further organized hierarchically~\citep{prototypical_contrastive}.

There are some recent works that learn graph representation in a self-supervised manner, such as local-global mutual information maximization~\citep{graph_infomax,infograph}, structural-similarity/context prediction~\citep{pretrain_kernel,pretraining_gnn,when_does}, contrastive learning~\citep{contrastive_multi-view,gcc,contrastive_augmentation} and meta-learning~\citep{learning_to_pretrain}. However, all these methods are able to model only the local structure between different graph instances but fail to discover the global-semantic structure. To address this shortcoming, we are seeking for an approach that is sufficient to model both the local and global structure of a set of graphs. 


\begin{figure*}[t]
	\centering
	\includegraphics[width=0.86\textwidth]{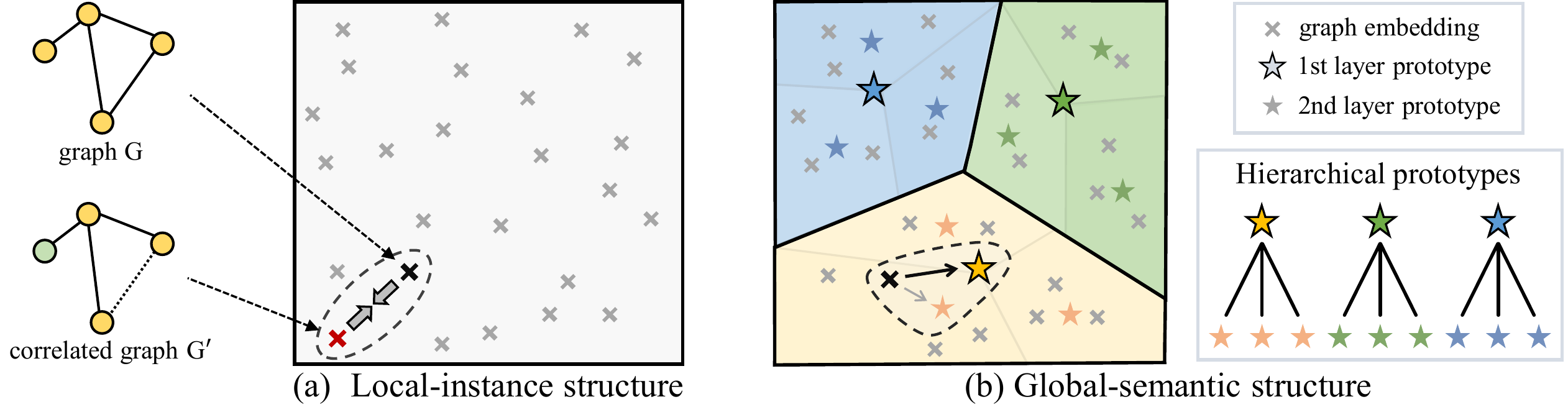}
	\vspace{-2mm}
	\caption{\textbf{Illustration of GraphLoG.} (a) Correlated graphs are constrained to be adjacently embedded to pursue the local-instance structure of the data. (b) Hierarchical prototypes are employed to discover and refine the global-semantic structure of the data.}
	\label{fig_framework}
	\vspace{-0mm}
\end{figure*}


To attain this goal, we propose a \textbf{Lo}cal-instance and \textbf{G}lobal-semantic Learning (\emph{GraphLoG}) framework for self-supervised graph representation learning. Specifically, for preserving the local similarity between various graph instances, we seek to align the embeddings of correlated graphs/subgraphs by discriminating the correlated graph/subgraph pairs from the negative pairs. In this locally smooth latent space, we further introduce the additional model parameters, \emph{hierarchical prototypes}\footnote{Hierarchical prototypes are representative cluster embeddings organized as a set of trees.}, to depict the latent distribution of a graph data set in a hierarchical way. For model learning, we propose to maximize the data likelihood with respect to both the GNN parameters and hierarchical prototypes via an online expectation-maximization (EM) algorithm. Given a mini-batch of graphs sampled from the data distribution, in the E-step, we infer the embeddings of these graphs with a GNN and sample the latent variable of each graph (\emph{i.e.} the prototypes associated to each graph) from the posterior distribution defined by current model. In the M-step, we aim to maximize the expectation of complete-data likelihood with respect to the current model by optimizing with a mini-batch-induced objective function. Therefore, in this iterative EM process, the global-semantic structure of the data can be gradually discovered and refined. The whole model is pre-trained with a large number of unlabeled graphs, and then fine-tuned and evaluated on some downstream tasks containing scarce labeled graphs. 

To verify the effectiveness of the GraphLoG framework, we apply our method to both the chemistry and biology domains. Through pre-training on massive unlabeled molecular graphs (or protein ego-networks) using the proposed local and global objectives, the existing GNN models are able to achieve superior performance on the downstream molecular property (or biological function) prediction benchmarks. In particular, the Graph Isomorphism Network (GIN)~\citep{gin} pre-trained by the proposed method outperforms the previous self-supervised graph representation learning approaches on six of eight downstream tasks of chemistry domain, and it achieves a $2.1\%$ performance gain in terms of average ROC-AUC on eight downstream tasks. In addition, the analytical experiments further illustrate the benefits of global structure learning through conducting ablation studies and visualizing the embedding distributions on a set of graphs. 


\section{Problem Definition and Preliminaries} \label{sec2}


\subsection{Problem Definition} \label{sec2_1}

An ideal representation should preserve the local structure among various data instances. More specifically, we define it as follows:

\textbf{Definition 1 (Local-instance Structure).} The local structure aims to preserve the pairwise similarity between different instances after mapping from the high-dimensional input space to the low-dimensional latent space~\citep{lle,eigenmap}. For a pair of similar graphs/subgraphs, $\mathcal{G}$ and $\mathcal{G'}$, their embeddings are expected to be nearby in the latent space, as illustrated in Fig.~\ref{fig_framework}(a), while the dissimilar pairs should be mapped to far apart.

The pursuit of local-instance structure alone is usually insufficient to capture the semantics underlying the entire data set. It is therefore important to discover the global-semantic structure of the data, which is concretely defined as follows:

\textbf{Definition 2 (Global-semantic Structure).} A real-world data set can usually be organized as various semantic clusters~\citep{semantic_structure,invariant_information}, especially in a hierarchical way for graph-structured data~\citep{gene,atc}. After mapping to the latent space, the embeddings of a set of graphs are expected to form some global structures reflecting the clustering patterns of the original data. A graphical illustration can be seen in Fig.~\ref{fig_framework}(b).

\textbf{Problem Definition.} The problem of \emph{self-supervised graph representation learning} considers a set of unlabeled graphs $\mathbf{G} = \{ \mathcal{G}_1, \mathcal{G}_2, \cdots, \mathcal{G}_{M} \}$, and we aim at learning a low-dimensional vector $h_{\mathcal{G}_m} \in \mathbb{R}^{\delta}$ for each graph $\mathcal{G}_m \in \mathbf{G}$ under the guidance of the data itself. In specific, we expect the graph embeddings $\mathbf{H} = \{ h_{\mathcal{G}_1}, h_{\mathcal{G}_2}, \cdots, h_{\mathcal{G}_M} \}$ follow both the local-instance and global-semantic structure. 


\subsection{Preliminaries} \label{sec2_2}
\textbf{Graph Neural Networks (GNNs).} A typical graph can be represented as $\mathcal{G} = (\mathcal{V}, \mathcal{E}, X_{\mathcal{V}}, X_{\mathcal{E}})$, where $\mathcal{V}$ is a set of nodes, $\mathcal{E}$ denotes the edge set, $X_{\mathcal{V}} = \{X_v | v \in \mathcal{V}\}$ stands for the attributes of all nodes, and $X_{\mathcal{E}} = \{X_{uv} | (u,v) \in \mathcal{E}\}$ represents the edge attributes. A GNN aims to learn an embedding vector $h_v \in \mathbb{R}^{\delta}$ for each node $v \in \mathcal{V}$ and also a vector $h_{\mathcal{G}} \in \mathbb{R}^{\delta}$ for the entire graph $\mathcal{G}$. For an $L$-layer GNN, a neighborhood aggregation scheme is performed to capture the $L$-hop information surrounding each node. As suggested in \citet{mpnn}, the $l$-th layer of a GNN can be formalized as follows: 
\begin{equation} \label{eq1}
\small
h_v^{(l)} = f^{(l)}_{U} \big( h_v^{(l-1)} , f^{(l)}_{M} \big( \big\{ \big( h_v^{(l-1)}, h_u^{(l-1)}, X_{uv} \big) : u \in \mathcal{N}(v) \big\} \big) \big) ,
\end{equation}
where $\mathcal{N}(v)$ is the neighborhood set of $v$, $h_v^{(l)}$ denotes the representation of node $v$ at the $l$-th layer, $h_v^{(0)}$ is initialized as the node attribute $X_v$, and $f^{(l)}_{M}$ and $f^{(l)}_{U}$ stand for the message passing and update function at the $l$-th layer, respectively. Since $h_v$ summarizes the information of a subgraph centered around node $v$, we will refer to $h_v$ as \emph{subgraph embedding} to underscore this point. The entire graph's embedding can be derived as below:
\begin{equation} \label{eq2}
h_{\mathcal{G}} = f_{R} \big( \big\{ h_{v} | v \in \mathcal{V} \big\} \big) ,
\end{equation}
where $f_{R}$ is a permutation-invariant readout function, \emph{e.g.} mean pooling or more complex graph-level pooling function~\citep{differentiable_pooling,end-to-end}.


\textbf{General EM Algorithm.} The basic objective of EM algorithm~\citep{em_incomplete,em_algorithm} is to find the maximum likelihood solution for a model containing latent variables. In such a problem, we denote the set of all observed data as $\mathbf{X}$, the set of all latent variables as $\mathbf{Z}$ and the set of all model parameters as $\bm{\theta}$. 

In the E-step, using the data $\mathbf{X}$ and the model parameters $\bm{\theta}_{t-1}$ estimated by the last EM cycle, the posterior distribution of latent variables is derived as $p(\mathbf{Z} | \mathbf{X}, \bm{\theta}_{t-1})$ which can also be regarded as the responsibility that a specific set of latent variables are taken for explaining the observations. 

In the M-step, employing the posterior distribution given by the E-step, the expectation of complete-data log-likelihood, denoted as $Q(\bm{\theta})$, is evaluated for the general model parameters $\bm{\theta}$ as follows:
\begin{equation} \label{eq3}
\begin{split}
Q(\bm{\theta}) = \mathbb{E}_{p(\mathbf{Z} | \mathbf{X}, \bm{\theta}_{t-1})} [\log p(\mathbf{X}, \mathbf{Z} | \bm{\theta})] . 
\end{split}
\end{equation}
The model parameters are updated to maximize this expectation in the M-step, which outputs:
\begin{equation} \label{eq4}
\bm{\theta}_{t} = \mathop{\arg\max}_{\bm{\theta}} \, Q(\bm{\theta}) .
\end{equation}

Each cycle of EM can increase the complete-data likelihood expected by the current model, and, considering the whole progress, the EM algorithm has been demonstrated to be capable of maximizing the marginal likelihood function $p(\mathbf{X} | \bm{\theta})$~\citep{another_interpretation,justify_em}. 


\section{GraphLoG: Self-supervised Graph-level Representation Learning with Local and Global Structure} \label{sec3} 

In this section, we introduce our approach called Local-instance and Global-semantic Learning (GraphLoG) for self-supervised graph representation learning. The main purpose of GraphLoG is to discover and refine both the local and global structures of graph embeddings in the latent space, such that we can learn useful graph representations for the downstream task like graph classification. Specifically, GraphLoG constructs a locally smooth latent space by aligning the embeddings of correlated graphs/subgraphs. On such basis, the global structures of graph embeddings are modeled by hierarchical prototypes, and the data likelihood is maximized via an online EM algorithm. Next, we elucidate the GraphLoG framework in detail. 


\subsection{Learning Local-instance Structure of Graph Representations} \label{sec3_1}

Following the existing methods for dimensionality reduction~\citep{isomap,lle,eigenmap}, the goal of local-structure learning is to preserve the local similarity of the data before and after mapping to a low-dimensional latent space. In specific, it is expected that similar graphs or subgraphs are embedded close to each other, and dissimilar ones are mapped to far apart. Using a similarity measurement defined in the latent space, we formulate this problem as maximizing the similarity of correlated graph/subgraph pairs while minimizing that of negative pairs. 

In specific, given a graph $\mathcal{G} = (\mathcal{V}, \mathcal{E}, X_{\mathcal{V}}, X_{\mathcal{E}})$ sampled from the data distribution $P_{\mathcal{G}}$, we obtain its correlated counterpart $\mathcal{G}' = (\mathcal{V}', \mathcal{E}', X_{\mathcal{V}'}, X_{\mathcal{E}'})$ through randomly masking a part of node/edge attributes in the graph~\citep{pretraining_gnn} (see appendix for the detailed scheme). In addition, for a subgraph $\mathcal{G}_{v}$ constituted by node $v$ and its $L$-hop neighborhoods in graph $\mathcal{G}$, we regard the corresponding subgraph $\mathcal{G}'_{v}$ in graph $\mathcal{G}'$ as its correlated counterpart. Through applying an $L$-layer GNN model $\mathrm{GNN}_{\theta}$ ($\theta$ stands for GNN's parameters) upon graph $\mathcal{G}$ and $\mathcal{G}'$, the graph and subgraph embeddings are derived as follows:
\begin{equation} \label{eq5}
(h_{\mathcal{V}}, h_{\mathcal{G}}) = \mathrm{GNN}_{\theta} (\mathcal{G}), \quad (h_{\mathcal{V}'}, h_{\mathcal{G}'}) = \mathrm{GNN}_{\theta} (\mathcal{G}') ,
\end{equation}
where $h_{\mathcal{V}} = \{ h_{\mathcal{G}_{v}} | v \in \mathcal{V} \}$ and $h_{\mathcal{V}'} = \{ h_{\mathcal{G}'_{v}} | v \in \mathcal{V}' \}$ represent the set of subgraph embeddings within graph $\mathcal{G}$ and $\mathcal{G}'$, respectively. 

In this phase, the objective of learning is to enhance the similarity of correlated graph/subgraph pairs while diminish that of negative pairs. Using a specific similarity measure (\emph{e.g.} cosine similarity $s(x,y) = x^{\top}y / ||x|| \ \! ||y||$ in our practice), we seeks to optimize the following two objective functions for graph and subgraph, respectively:
\begin{equation} \label{eq6}
\begin{split}
\mathcal{L}_{\textrm{graph}} = & \! - \! \mathbb{E}_{(\mathcal{G}_{+},\mathcal{G}'_{+}) \sim p(\mathbf{\mathcal{G}}, \mathbf{\mathcal{G}'}), (\mathcal{G}_{-},\mathcal{G}'_{-}) \sim p_n (\mathbf{\mathcal{G}}, \mathbf{\mathcal{G}'})} \big[ \\ 
& \qquad \qquad \quad \; \; s(\mathcal{G}_{+}, \mathcal{G}'_{+}) - s(\mathcal{G}_{-}, \mathcal{G}'_{-}) \big],
\end{split}
\end{equation}
\begin{equation} \label{eq7}
\begin{split}
\mathcal{L}_{\textrm{sub}} = & \! - \! \mathbb{E}_{(\mathcal{G}_{u},\mathcal{G}'_{u}) \sim p(\mathbf{\mathcal{G}_{v}}, \mathbf{\mathcal{G}'_{v}}), (\mathcal{G}_{v},\mathcal{G}'_{w}) \sim p_n (\mathbf{\mathcal{G}_{v}}, \mathbf{\mathcal{G}'_{v}})} \big[ \\ 
& \qquad \qquad \quad \quad \; \; \ s(\mathcal{G}_{u}, \mathcal{G}'_{u}) - s(\mathcal{G}_{v}, \mathcal{G}'_{w}) \big],
\end{split}
\end{equation}
where $p_n (\mathbf{\mathcal{G}}, \mathbf{\mathcal{G}'})$ and $p_n (\mathbf{\mathcal{G}_{v}}, \mathbf{\mathcal{G}'_{v}})$ denote the noise distribution from which negative pairs are sampled. In practice, for a correlated graph pair $(\mathcal{G}, \mathcal{G}')$ or correlated subgraph pair $(\mathcal{G}_{v}, \mathcal{G}'_{v})$, we substitute $\mathcal{G}$ ($\mathcal{G}_{v}$) randomly with another graph from the data set (a subgraph centered around another node in the same graph) to construct negative pairs. 

For learning the local-instance structure of graph representations, we aim to minimize both objective functions (Eqs.~\ref{eq6} and \ref{eq7}) with respect to the parameters of GNN:
\begin{equation} \label{eq8}
\min \limits_{\theta} \, \mathcal{L}_{\textrm{local}} , 
\end{equation}
\begin{equation} \label{eq9}
\mathcal{L}_{\textrm{local}} = \mathcal{L}_{\textrm{graph}} + \mathcal{L}_{\textrm{sub}} .
\end{equation}


\subsection{Learning Global-semantic Structure of Graph Representations} \label{sec3_2}

It is worth noticing that the graphs in a data set may possess hierarchical semantic information. For example, drugs (\emph{i.e.} molecular graphs) are represented by a five-level hierarchy in the Anatomical Therapeutic Chemical (ATC) classification system~\citep{atc}. After mapping to the latent space, the embeddings of all graphs in the data set are also expected to form some global structures corresponding to the hierarchical semantic structures of the original data.

However, for the lack of explicit semantic labels in the self-supervised graph representation learning, such global structure cannot be attained via label-induced supervision. To tackle this limitation, we introduce an additional set of model parameters, \emph{hierarchical prototypes}, to represent the feature clusters in the latent space in a hierarchical way. They are formally defined as $\mathbf{C} = \{c^{l}_{i}\}_{i=1}^{M_{l}}$ ($l = 1, 2, \cdots , L_p$), where $c^{l}_{i} \in \mathbb{R}^{\delta}$ stands for the $i$-th prototype at the $l$-th layer, $L_p$ is the depth of hierarchical prototypes, and $M_{l}$ denotes the number of prototypes at the $l$-th layer. These prototypes are structured as a set of trees (Fig.~\ref{fig_framework}(b)), in which each node corresponds to a prototype, and, except for the leaf nodes, each prototype possesses a set of child nodes, denoted as $\mathbb{C}(c^{l}_{i})$ ($1 \leqslant i \leqslant M_{l}$, $l = 1, 2, \cdots , L_p - 1$). 

The goal of global-semantic learning is to encourage the graphs to be compactly embedded around corresponding prototypes and, at the same time, refine hierarchical prototypes to better represent the data. We formalize this problem as optimizing a latent variable model. Specifically, for the observed data set $\mathbf{G} = \{ \mathcal{G}_1, \mathcal{G}_2, \cdots, \mathcal{G}_{M} \}$, we consider a latent variable set, \emph{i.e.} the prototype assignments of all graphs $\mathbf{Z} = \{ z_{\mathcal{G}_1}, z_{\mathcal{G}_2}, \cdots, z_{\mathcal{G}_M} \}$ ($z_{\mathcal{G}_m}$ is a set of prototypes that best represent $\mathcal{G}_m$ in the latent space). The model parameters in this problem are the GNN parameters $\theta$ and hierarchical prototypes $\mathbf{C}$. Since the corresponding latent variable of each graph is not given, it is hard to directly maximize the complete-data likelihood function $p(\mathbf{G}, \mathbf{Z} | \theta, \mathbf{C})$. Therefore, we seek to maximize the expectation of complete-data likelihood through the EM algorithm. 

The vanilla EM algorithm~\citep{em_incomplete,em_algorithm} requires a full pass through the data set before each parameter update, which is computationally inefficient when the size of data set is large like in our case. Therefore, we consider an online EM variant~\citep{online_em_1,online_em_2,online_em_3} which operates on mini-batches of data. This approach is based on the \emph{i.i.d.} assumption of the data set, where both the complete-data likelihood and the posterior probability of latent variables can be factorized over each observed-latent variable pair:
\begin{equation} \label{eq10}
p(\mathbf{G}, \mathbf{Z} | \theta, \mathbf{C}) = \prod_{m=1}^{M} p(\mathcal{G}_m, z_{\mathcal{G}_m} | \theta, \mathbf{C}) ,
\end{equation}
\begin{equation} \label{eq11}
p(\mathbf{Z} | \mathbf{G}, \theta, \mathbf{C}) = \prod_{m=1}^{M} p(z_{\mathcal{G}_m} | \mathcal{G}_m, \theta, \mathbf{C}) .
\end{equation}
First, we introduce the initialization scheme of model parameters. 


\textbf{Initialization of model parameters.} Before triggering the global structure exploration, we first pre-train the GNN by minimizing $\mathcal{L}_{\mathrm{local}}$ and employ the derived GNN model as initialization, which establishes a locally smooth latent space. After that, we utilize this pre-trained GNN model to extract the embeddings of all graphs in the data set, and the K-means clustering is applied upon these graph embeddings to initialize the bottom layer prototypes (\emph{i.e.} $\{c^{L_p}_{i}\}_{i=1}^{M_{L_p}}$) with the output cluster centers. The prototypes of upper layers are initialized by iteratively applying K-means clustering to the prototypes of the layer below. For each time of clustering, we discard the output cluster centers assigned with less than two samples to avoid trivial solutions~\citep{diffrac,deep_clustering}. 

Next, we state the details of the E-step and M-step applied in our method.  


\textbf{E-step.} In this step, we first randomly sample a mini-batch of graphs $\widetilde{\mathbf{G}} = \{ \mathcal{G}_1, \mathcal{G}_2, \cdots, \mathcal{G}_{N} \}$ ($N$ denotes the batch size) from the data set $\mathbf{G}$, and $\widetilde{\mathbf{Z}} = \{ z_{\mathcal{G}_n} \}_{n=1}^{N}$ stands for the latent variables corresponding to these sampled graphs. Each latent variable $z_{\mathcal{G}_n} = \{ z^{1}_{\mathcal{G}_n}, z^{2}_{\mathcal{G}_n}, \cdots , z^{L_p}_{\mathcal{G}_n} \}$ is a chain of prototypes from top layer to bottom layer that best represent graph $\mathcal{G}_n$ in the latent space, and it holds that $z^{l+1}_{\mathcal{G}_n}$ is the child node of $z^{l}_{\mathcal{G}_n}$ in the corresponding tree structure, \emph{i.e.} $z^{l+1}_{\mathcal{G}_n} \in \mathbb{C}(z^{l}_{\mathcal{G}_n})$ ($l = 1, 2, \cdots , L_p - 1$). The posterior distribution of $\widetilde{\mathbf{Z}}$ can be evaluated in a factorized way because of the \emph{i.i.d.} assumption:
\begin{equation} \label{eq12}
p(\widetilde{\mathbf{Z}} | \widetilde{\mathbf{G}}, \theta_{t-1}, \mathbf{C}_{t-1}) = \prod_{n=1}^{N} p(z_{\mathcal{G}_n} | \mathcal{G}_n, \theta_{t-1}, \mathbf{C}_{t-1}) ,
\end{equation}
where $\theta_{t-1}$ and $\mathbf{C}_{t-1}$ are the model parameters from the last EM cycle. Directly evaluating each posterior distribution $p(z_{\mathcal{G}_n} | \mathcal{G}_n, \theta_{t-1}, \mathbf{C}_{t-1})$ is nontrivial, which requires to traverse all the possible chains in hierarchical prototype. Instead, we adopt the idea of stochastic EM algorithm~\citep{two_stochastic,stochastic_EM} and draw a sample $\hat{z}_{\mathcal{G}_n} \sim p(z_{\mathcal{G}_n} | \mathcal{G}_n, \theta_{t-1}, \mathbf{C}_{t-1})$ for Monte Carlo estimation. In specific, we sequentially sample a prototype from each layer in a top-down manner, and all the sampled prototypes form a connected chain from top layer to bottom layer in hierarchical prototypes. Formally, we first sample a prototype from top layer according to a categorical distribution over all the top layer prototypes, \emph{i.e.} $\hat{z}^{1}_{\mathcal{G}_n} \sim \mathrm{Cat}(z^{1}_{\mathcal{G}_n} | \{ \alpha_{i} \}_{i=1}^{M_{1}})$ ($\alpha_i = \mathrm{softmax}(s(c^{1}_{i}, h_{\mathcal{G}_n}))$), where $s$ denotes the cosine similarity measurement; for the sampling at layer $l$ ($l \geqslant 2$), we draw a prototype from that layer based on a categorical distribution over the child nodes of prototype $\hat{z}^{l-1}_{\mathcal{G}_n}$ sampled from the layer above, \emph{i.e.} $\hat{z}^{l}_{\mathcal{G}_n} \sim \mathrm{Cat}(z^{l}_{\mathcal{G}_n} | \{ \alpha_{c} \})$ ($\alpha_c = \mathrm{softmax}(s(c, h_{\mathcal{G}_n}))$, $\forall c \in \mathbb{C}(\hat{z}^{l-1}_{\mathcal{G}_n})$), such that we sample a latent variable $ \hat{z}_{\mathcal{G}_n} = \{\hat{z}^{1}_{\mathcal{G}_n}, \hat{z}^{2}_{\mathcal{G}_n}, \cdots , \hat{z}^{L_p}_{\mathcal{G}_n}\}$ which is a connected chain in hierarchical prototypes. Using the latent variables inferred as above, we seek to maximize the expectation of complete-data log-likelihood in the M-step. 


\textbf{M-step.} In this step, we aim at maximizing the expected complete-data log-likelihood with respect to the posterior distribution of latent variables, which is defined as follows:
\begin{equation} \label{eq13}
Q(\theta, \mathbf{C}) = \mathbb{E}_{p(\mathbf{Z} | \mathbf{G}, \theta_{t-1}, \mathbf{C}_{t-1})} [\log p(\mathbf{G}, \mathbf{Z} | \theta, \mathbf{C})] . 
\end{equation}
This expectation needs the computation over all data points, which cannot be attained in the online setting. As a substitute, we propose to maximize the expected log-likelihood on mini-batch $\widetilde{\mathbf{G}}$, which can be estimated using the latent variables $\widetilde{\mathbf{Z}}_{est} = \{\hat{z}_{\mathcal{G}_n}\}_{n=1}^{N}$ sampled in the E-step:
\begin{equation} \label{eq14}
\begin{split}
\widetilde{Q}(\theta, \mathbf{C}) & = \mathbb{E}_{p(\widetilde{\mathbf{Z}} | \widetilde{\mathbf{G}}, \theta_{t-1}, \mathbf{C}_{t-1})} [ \log p(\widetilde{\mathbf{G}}, \widetilde{\mathbf{Z}} | \theta, \mathbf{C}) ] \\
& \approx \log p(\widetilde{\mathbf{G}}, \widetilde{\mathbf{Z}}_{est} | \theta, \mathbf{C}) \\
& = \sum_{n=1}^{N} \log p(\mathcal{G}_n, \hat{z}_{\mathcal{G}_n} | \theta, \mathbf{C}) .
\end{split}
\end{equation}
We would like to point out that $\widetilde{Q}(\theta, \mathbf{C})$ is a decent proxy for $Q(\theta, \mathbf{C})$, where a proportional relation approximately holds between them (see appendix for the proof):
\begin{equation} \label{eq15}
\widetilde{Q}(\theta, \mathbf{C}) \approx \frac{N}{M} \, Q(\theta, \mathbf{C}) .
\end{equation}
We further scale $\widetilde{Q}(\theta, \mathbf{C})$ with the batch size to derive the log-likelihood function $\mathcal{L}(\theta, \mathbf{C})$ that is more stable in terms of computation:
\begin{equation} \label{eq16}
\mathcal{L}(\theta, \mathbf{C}) = \frac{1}{N} \, \widetilde{Q}(\theta, \mathbf{C}). 
\end{equation}
To estimate $\mathcal{L}(\theta, \mathbf{C})$, we need to define the joint likelihood of a graph $\mathcal{G}$ and a latent variable $z_{\mathcal{G}}$, which is represented with an energy-based formulation in our method:
\begin{equation} \label{eq17}
p(\mathcal{G}, z_{\mathcal{G}} | \theta, \mathbf{C}) = \frac{1}{Z(\theta, \mathbf{C})} \exp \big( f(h_{\mathcal{G}}, z_{\mathcal{G}}) \big) ,
\end{equation}
where $Z(\theta, \mathbf{C})$ denotes the partition function. We formalize the negative energy function $f$ by measuring the similarities between graph embedding $h_{\mathcal{G}}$ and the prototypes in $z_{\mathcal{G}}$ and also measuring the similarities between the prototypes in $z_{\mathcal{G}}$ that are from consecutive layers:
\begin{equation} \label{eq18}
\small
f(h_{\mathcal{G}}, z_{\mathcal{G}}) = \sum_{l=1}^{L_p} s \big( h_{\mathcal{G}}, z^{l}_{\mathcal{G}} \big) + \sum_{l=1}^{L_p - 1} s \big( z^{l}_{\mathcal{G}}, z^{l+1}_{\mathcal{G}} \big) .
\end{equation}
Intuitively, $f$ evaluates how well a latent variable $z_{\mathcal{G}}$ represents graph $\mathcal{G}$ in the latent space, and it also measures the affinity between the consecutive prototypes along a chain from top layer to bottom layer in hierarchical prototypes.  

It is nontrivial to optimize with $p(\mathcal{G}, z_{\mathcal{G}} | \theta, \mathbf{C})$ due to the intractable partition function. Inspired by Noise-Contrastive Estimation (NCE)~\citep{NCE_1,NCE_2}, 
we seek to optimize with the unnormalized likelihoods, \emph{i.e.} $\tilde{p}(\mathcal{G}, z_{\mathcal{G}} | \theta, \mathbf{C}) = \exp ( f(h_{\mathcal{G}}, z_{\mathcal{G}}) )$, by contrasting the positive observed-latent variable pair with the negative pairs sampled from some noise distribution, which defines an objective function that well approximates $\mathcal{L}(\theta, \mathbf{C})$:
\begin{equation} \label{eq19}
\small
\begin{split}
\mathcal{L}_{\textrm{global}} & \! = \! - \mathbb{E}_{(\mathcal{G}^{+}, z^{+}_{\mathcal{G}}) \sim p(\mathcal{G}, z_{\mathcal{G}})} \Big\{ \log \tilde{p}(\mathcal{G}^{+}, z^{+}_{\mathcal{G}} | \theta, \mathbf{C}) \\
& \quad - \mathbb{E}_{(\mathcal{G}^{-}, z^{-}_{\mathcal{G}}) \sim p_n (\mathcal{G}, z_{\mathcal{G}})} \big[ \log \tilde{p}(\mathcal{G}^{-}, z^{-}_{\mathcal{G}} | \theta, \mathbf{C}) \big] \Big\} ,
\end{split}
\end{equation}
where $p_n (\mathcal{G}, z_{\mathcal{G}})$ is the noise distribution. 
In practice, we compute the outer expectation with all the positive pairs in the mini-batch, \emph{i.e.} $(\mathcal{G}_n, \hat{z}_{\mathcal{G}_n})$ ($1 \leqslant n \leqslant N$), and, for computing the inner expectation, we construct $L_p$ negative pairs for the positive pair $(\mathcal{G}_n, \hat{z}_{\mathcal{G}_n})$ by fixing the graph $\mathcal{G}_n$ and randomly substituting one of $L_p$ prototypes in $\hat{z}_{\mathcal{G}_n}$ with another prototype at the same layer each time. For global-semantic learning, we aim to minimize the global objective function $\mathcal{L}_{\textrm{global}}$ with respect to both the GNN parameter $\theta$ and hierarchical prototypes $\mathbf{C}$:
\begin{equation} \label{eq20}
\min \limits_{\theta, \mathbf{C}} \, \mathcal{L}_{\textrm{global}} .
\end{equation}

In general, the proposed online EM algorithm seeks to maximize the joint likelihood $p(\mathbf{G}, \mathbf{Z} | \theta, \mathbf{C})$ governed by model parameters $\theta$ and $\mathbf{C}$. For a step further, we propose the following proposition that this algorithm can indeed maximize the marginal likelihood function $p(\mathbf{G} | \theta, \mathbf{C})$.
\begin{myprop} \label{prop1}
For each EM cycle, the model parameters $\theta$ and $\mathbf{C}$ are updated in such a way that increases the marginal likelihood function $p(\mathbf{G} | \theta, \mathbf{C})$, unless a local maximum is reached on the mini-batch log-likelihood function $\widetilde{Q}(\theta, \mathbf{C})$.
\end{myprop}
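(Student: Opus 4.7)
The plan is to adapt the classical EM monotonicity argument to the online mini-batch setting by chaining two observations: (i) an update that raises the mini-batch objective $\widetilde{Q}(\theta, \mathbf{C})$ also raises the full expected complete-data log-likelihood $Q(\theta, \mathbf{C})$; and (ii) raising $Q$ under the posterior used in the E-step raises the marginal log-likelihood $\log p(\mathbf{G} | \theta, \mathbf{C})$. Composing the two inequalities yields the monotonicity asserted by the proposition, with the single escape route being the case where $(\theta_{t-1}, \mathbf{C}_{t-1})$ is already a stationary point of the mini-batch surrogate.

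Step~(i) is immediate from Eq.~\ref{eq15}, whose proof is deferred to the appendix: since $\widetilde{Q}(\theta, \mathbf{C}) \approx (N/M)\, Q(\theta, \mathbf{C})$ and $N/M > 0$ is independent of the parameters, any parameter move that strictly increases $\widetilde{Q}$ strictly increases $Q$ as well. The only failure mode is a zero-gradient configuration of $\widetilde{Q}$, which is precisely the ``local maximum of $\widetilde{Q}$'' qualifier in the statement.

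For step~(ii) I would invoke the standard ELBO decomposition: for any distribution $q(\mathbf{Z})$,
\begin{equation*}
\log p(\mathbf{G} | \theta, \mathbf{C}) = \mathbb{E}_{q} \big[ \log p(\mathbf{G}, \mathbf{Z} | \theta, \mathbf{C}) \big] - \mathbb{E}_{q} [\log q(\mathbf{Z})] + \mathrm{KL} \big( q(\mathbf{Z}) \, \| \, p(\mathbf{Z} | \mathbf{G}, \theta, \mathbf{C}) \big).
\end{equation*}
Fixing $q(\mathbf{Z}) = p(\mathbf{Z} | \mathbf{G}, \theta_{t-1}, \mathbf{C}_{t-1})$, as the E-step effectively prescribes, the KL term vanishes at $(\theta_{t-1}, \mathbf{C}_{t-1})$, so that $\log p(\mathbf{G} | \theta_{t-1}, \mathbf{C}_{t-1}) = Q(\theta_{t-1}, \mathbf{C}_{t-1}) - \mathbb{E}_{q}[\log q(\mathbf{Z})]$. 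At the updated parameters $(\theta_t, \mathbf{C}_t)$, the same right-hand side expression becomes a lower bound because the KL re-introduced is non-negative. Combining this with the strict $Q$-increase from step~(i) gives $\log p(\mathbf{G} | \theta_t, \mathbf{C}_t) \geq Q(\theta_t, \mathbf{C}_t) - \mathbb{E}_{q}[\log q(\mathbf{Z})] > Q(\theta_{t-1}, \mathbf{C}_{t-1}) - \mathbb{E}_{q}[\log q(\mathbf{Z})] = \log p(\mathbf{G} | \theta_{t-1}, \mathbf{C}_{t-1})$, as required.

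The main obstacle is rigorously handling the two approximations built into the online E-step: replacing the intractable posterior $p(\mathbf{Z} | \widetilde{\mathbf{G}}, \theta_{t-1}, \mathbf{C}_{t-1})$ by a single Monte Carlo sample $\widetilde{\mathbf{Z}}_{est}$, and replacing the normalized joint likelihood by its NCE surrogate $\tilde{p}$ in Eq.~\ref{eq19}. The cleanest resolution is to read the monotonicity in expectation over the random draw of $\widetilde{\mathbf{G}}$ and $\widetilde{\mathbf{Z}}_{est}$, matching standard online/stochastic EM analyses in the cited literature; the NCE approximation is treated as exact for the purposes of this proposition, with any residual gap inherited from the consistency guarantees of NCE.
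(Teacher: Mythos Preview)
Your proof is correct and uses the same two ingredients as the paper---the standard ELBO decomposition and the proportionality $\widetilde{Q}\approx(N/M)\,Q$ from Eq.~\ref{eq15}---merely composed in the opposite order: you first pass from $\widetilde{Q}$ to $Q$ and then run the ELBO argument on the full dataset, whereas the paper runs the ELBO argument on the mini-batch marginal $\log p(\widetilde{\mathbf{G}}\mid\theta,\mathbf{C})$ and only afterward scales up to $\log p(\mathbf{G}\mid\theta,\mathbf{C})$ via the same proportionality. Your caveats about the Monte Carlo sampling and the NCE surrogate are also exactly those the paper leaves implicit.
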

The proof of Proposition~\ref{prop1} is provided in the appendix.


\begin{algorithm}[tb] 
   \caption{Optimization Algorithm of GraphLoG.}
   \label{algo1}
\begin{spacing}{1.05}
\begin{algorithmic}
   \STATE {\bfseries Input:} Unlabeled graph data set $\mathbf{G}$, the number of \\ learning steps $T$.
   \STATE {\bfseries Output:} Pre-trained GNN model $\mathrm{GNN}_{\theta_{T}}$.
   \STATE Pre-train GNN with local objective function (Eq.~\ref{eq9}).
   \STATE Initialize model parameters $\theta_0$ and $\mathbf{C}_0$.
   \FOR{$t=1$ {\bfseries to} $T$}
   \STATE Sample a mini-batch $\widetilde{\mathbf{G}}$ from $\mathbf{G}$.
   \STATE $\Diamond$ \emph{E-step}:
   \STATE Sample latent variables $\widetilde{\mathbf{Z}}_{est}$ with $\mathrm{GNN}_{\theta_{t-1}}$ and $\mathbf{C}_{t-1}$.
   \STATE $\Diamond$ \emph{M-step}:
   \STATE Update model parameters:
   \STATE $\quad \ \, \theta_t \gets \theta_{t-1} - \nabla_{\theta} (\mathcal{L}_{\textrm{local}} + \mathcal{L}_{\textrm{global}})$,
   \STATE $\quad \ \, \mathbf{C}_t \gets \mathbf{C}_{t-1} - \nabla_{\mathbf{C}} (\mathcal{L}_{\textrm{local}} + \mathcal{L}_{\textrm{global}})$.
   \ENDFOR
\end{algorithmic}
\end{spacing}
\end{algorithm}


\subsection{Model Optimization and Downstream Application} \label{sec3_3}

The GraphLoG framework seeks to learn the graph representations preserving both the local-instance and global-semantic structure on an unlabeled graph data set $\mathbf{G}$. For model optimization under this framework, we first pre-train the GNN by minimizing the local objective function $\mathcal{L}_{\textrm{local}}$ and initialize the model parameters with the pre-trained GNN. After that, for each learning step, we sample a mini-batch $\widetilde{\mathbf{G}}$ from the data set and conduct an EM cycle. In the E-step, the latent variables corresponding to the mini-batch, \emph{i.e.} $\widetilde{\mathbf{Z}}_{est}$, are sampled from the posterior distribution defined by the current model. In the M-step, model parameters are updated to maximize the expected log-likelihood on mini-batch $\widetilde{\mathbf{G}}$. Also, we add the local objective function to the optimization of M-step, which guarantees the local smoothness when pursuing the global-semantic structure and performs well in practice. We summarize the optimization algorithm in Alg.~\ref{algo1}. 

After the self-supervised pre-training on massive unlabeled graphs, the derived GNN model can be applied to various downstream tasks for producing effective embedding vectors of different graphs. For example, we can first pre-train a GNN model with GraphLoG on a large number of unlabeled molecules (\emph{i.e.} molecular graphs). After that, for a downstream task of molecular property prediction where a small set of labeled molecules are available, we can learn a linear classifier upon the pre-trained GNN model to perform the specific graph classification task. 


\section{Related Work} \label{sec4} 

\vspace{-0.5mm}
\textbf{Graph Neural Networks (GNNs).} Recently, following the efforts of learning graph representations via optimizing random walk~\citep{deepwalk,line,node2vec,graph2vec} or matrix factorization~\citep{grarep,sdne} objectives, GNNs explicitly derive proximity-preserved feature vectors in a neighborhood aggregation way. As suggested in~\citet{mpnn}, the forward pass of most GNNs can be depicted in two phases, \emph{Message Passing} and \emph{Readout} phase, and various works~\citep{molecular_fingerprint,gcn,graphsage,gat,differentiable_pooling,end-to-end,gin} sought to improve the effectiveness of these two phases. Unlike these methods which are mainly trained in a supervised fashion, our approach aims for self-supervised learning for GNNs.


\textbf{Self-supervised Learning for GNNs.} There are some recent works that explored self-supervised graph representation learning with GNNs. \citet{embedding_propagation} learned graph representations by embedding propagation, and \citet{graph_infomax}, \citet{infograph} achieved this goal through mutual information maximization. Also, some self-supervised tasks, \emph{e.g.} edge prediction~\citep{gvae}, context prediction~\citep{pretraining_gnn,graph_transformer}, graph partitioning~\citep{when_does}, edge/attribute generation~\citep{gpt-gnn} and contrastive learning~\citep{contrastive_multi-view,gcc,contrastive_augmentation}, have been designed to acquire knowledge from unlabeled graphs. Nevertheless, all these methods are only able to model the local relations between different graph instances. The proposed framework seeks to discover both the local-instance and global-semantic structure of a set of graphs.


\begin{table*}[t]
	\begin{spacing}{1.15}
		\centering
		\scriptsize
		\caption{Test ROC-AUC (\%) on downstream molecular property prediction benchmarks.} \label{tab_mol}
		\setlength{\tabcolsep}{1.7mm}
		\begin{tabular}{c|cccccccc|c} 
			\toprule[1.0pt]
			Methods & BBBP & Tox21 & ToxCast & SIDER & ClinTox & MUV & HIV & BACE & Avg \\
			\hline
			\hline
			Random & $65.8\pm4.5$ & $74.0\pm0.8$ & $63.4\pm 0.6$ & $57.3\pm1.6$ & $58.0\pm4.4$ & $71.8\pm2.5$ & $75.3\pm1.9$ & $70.1\pm5.4$ & $67.0$ \\
			\hline
			EdgePred (\citeyear{gvae}) & $67.3\pm2.4$ & $76.0\pm0.6$ & $64.1\pm0.6$ & $60.4\pm0.7$ & $64.1\pm3.7$ & $74.1\pm2.1$ & $76.3\pm1.0$ & $79.9\pm0.9$ & $70.3$ \\
			InfoGraph (\citeyear{infograph}) & $68.2\pm0.7$ & $75.5\pm0.6$ & $63.1\pm0.3$ & $59.4\pm1.0$ & $70.5\pm1.8$ & $75.6\pm1.2$ & $77.6\pm0.4$ & $78.9\pm1.1$ & $71.1$ \\
			AttrMasking (\citeyear{pretraining_gnn}) & $64.3\pm2.8$ & $\mathbf{76.7}\pm0.4$ & $\mathbf{64.2}\pm0.5$ & $61.0\pm0.7$ & $71.8\pm4.1$ & $74.7\pm1.4$ & $77.2\pm1.1$ & $79.3\pm1.6$ & $71.1$ \\
			ContextPred (\citeyear{pretraining_gnn}) & $68.0\pm2.0$ & $75.7\pm0.7$ & $63.9\pm0.6$ & $60.9\pm0.6$ & $65.9\pm3.8$ & $75.8\pm1.7$ & $77.3\pm1.0$ & $79.6\pm1.2$ & $70.9$ \\
			GraphPartition (\citeyear{when_does}) & $70.3\pm0.7$ & $75.2\pm0.4$ & $63.2\pm0.3$ & $61.0\pm0.8$ & $64.2\pm0.5$ & $75.4\pm1.7$ & $77.1\pm0.7$ & $79.6\pm1.8$ & $70.8$ \\
			GraphCL (\citeyear{contrastive_augmentation}) & $69.5\pm0.5$ & $75.4\pm0.9$ & $63.8\pm0.4$ & $60.8\pm0.7$ & $70.1\pm1.9$ & $74.5\pm1.3$ & $77.6\pm0.9$ & $78.2\pm1.2$ & $71.3$ \\
			\hline
			GraphLoG (ours) & $\mathbf{72.5}\pm0.8$ & $75.7\pm0.5$ & $63.5\pm0.7$ & $\mathbf{61.2}\pm1.1$ & $\mathbf{76.7}\pm3.3$ & $\mathbf{76.0}\pm1.1$ & $\mathbf{77.8}\pm0.8$ & $\mathbf{83.5}\pm1.2$ & $\mathbf{73.4}$ \\
			\bottomrule[1.0pt]
		\end{tabular}
	\end{spacing}
\end{table*}


\begin{table}[t]
	\begin{spacing}{1.15}
		\centering
		\scriptsize
		\vspace{-6mm}
		\caption{Test ROC-AUC (\%) on downstream biological function prediction benchmark.} \label{tab_bio}
		\setlength{\tabcolsep}{6.8mm}
		\begin{tabular}{c|c}
			\toprule[1.0pt]
			Methods & ROC-AUC (\%) \\
			\hline
			\hline
			Random & $64.8\pm1.0$ \\
			\hline
			EdgePred~\citep{gvae} & $70.5\pm0.7$  \\
			InfoGraph~\citep{infograph} & $70.7\pm0.5$ \\
			AttrMasking~\citep{pretraining_gnn} & $70.5\pm0.5$ \\
			ContextPred~\citep{pretraining_gnn} & $69.9\pm0.3$ \\
			GraphPartition~\citep{when_does} & $71.0\pm0.2$ \\
			GraphCL~\citep{contrastive_augmentation} & $71.2\pm0.6$ \\
			\hline
			GraphLoG (ours) & $\mathbf{72.9}\pm0.7$ \\
			\bottomrule[1.0pt]		
		\end{tabular}
		\vspace{-1.5mm}
	\end{spacing}
\end{table}


\textbf{Self-supervised Semantic Learning.} Clustering-based methods~\citep{unsupervised_deep_embedding,joint_cluster,k-means-friendly,deep_clustering,invariant_information,prototypical_contrastive} are commonly used to learn the semantic information of the data in a self-supervised fashion. Among which, DeepCluster~\citep{deep_clustering} proved the strong transferability of the visual representations learnt by clustering prediction to various downstream visual tasks. Prototypical Contrastive Learning~\citep{prototypical_contrastive} proved its superiority over the instance-level contrastive learning approaches.
These methods are mainly developed for images but not for graph-structured data. Furthermore, the hierarchical semantic structure of the data has been less explored in previous works.


\vspace{-1mm}
\section{Experiments} \label{sec5}
\vspace{-0.5mm}

In this section, we evaluate the performance of GraphLoG on both the chemistry and biology domains using the procedure of pre-training followed by fine-tuning. Also, analytical studies are conducted to verify the effectiveness of local and global structure learning. 


\subsection{Experimental Setup} \label{sec5_1}

\textbf{Pre-training details.} Following \citet{pretraining_gnn}, we adopt a five-layer Graph Isomorphism Network (GIN)~\citep{gin} with 300-dimensional hidden units and a mean pooling readout function for performance comparisons (Secs.~\ref{sec5_2} and \ref{sec5_3}). We use an Adam optimizer~\citep{adam} (learning rate: $1 \times 10^{-3}$) to pre-train the GNN with $\mathcal{L}_{\textrm{local}}$ for one epoch and then train the whole model with both $\mathcal{L}_{\textrm{local}}$ and $\mathcal{L}_{\textrm{global}}$ for 10 epochs. For each time of K-means clustering in the initialization of hierarchical prototypes, we adopt 50 initial cluster centers. Unless otherwise specified, the batch size $N$ is set as 512, and the hierarchical prototypes' depth $L_p$ is set as 3. These hyperparameters are selected by the grid search on the validation sets of four downstream molecule data sets (\emph{i.e.} BBBP, SIDER, ClinTox and BACE), and their sensitivity is analyzed in Sec.~\ref{sec5_4}. 


\textbf{Fine-tuning details.} For fine-tuning on a downstream task, a linear classifier is appended upon the pre-trained GNN, and an Adam optimizer (learning rate: $1 \times 10^{-3}$, fine-tuning batch size: 32) is employed to train the model for 100 epochs. We utilize a learning rate scheduler with fix step size which multiplies the learning rate by 0.3 every 30 epochs. All the reported results are averaged over five independent runs. The source code is available at \url{https://github.com/DeepGraphLearning/GraphLoG}.


\textbf{Performance comparison.} For the experiments on both chemistry and biology domains, we compare the proposed method with existing self-supervised graph representation learning algorithms (\emph{i.e.} EdgePred~\citep{gvae}, InfoGraph~\citep{infograph}, AttrMasking~\citep{pretraining_gnn}, ContextPred~\citep{pretraining_gnn}, GraphPartition~\citep{when_does} and GraphCL~\citep{contrastive_augmentation}) to verify its effectiveness. We report the results of EdgePred, AttrMasking and ContextPred from \citet{pretraining_gnn} and examine the performance of InfoGraph, GraphPartition and GraphCL based on the released source code. 


\begin{table}[t]
	\begin{spacing}{1.47}
		\centering
		\scriptsize
		\vspace{-6mm}
		\caption{Test ROC-AUC (\%) of different methods under four GNN architectures. (All results are reported on biology domain.)} \label{tab_gnn}
		\setlength{\tabcolsep}{0.9mm}
		\begin{tabular}{c|cccc}
			\toprule[1.0pt]
			Methods & GCN & GraphSAGE & GAT & GIN \\
			\hline
			\hline
			Random & $63.2\pm1.0$ & $65.7\pm1.2$ & $68.2\pm1.1$ & $64.8\pm1.0$\\
			\hline
			EdgePred (\citeyear{gvae}) & $68.0\pm0.9$ & $67.8\pm0.7$ & $67.9\pm1.3$ & $70.5\pm0.7$ \\
			AttrMasking (\citeyear{pretraining_gnn}) & $68.3\pm0.8$ & $69.2\pm0.6$ & $67.3\pm0.8$ & $70.5\pm0.5$ \\
			ContextPred (\citeyear{pretraining_gnn}) & $67.6\pm0.3$ & $69.6\pm0.6$ & $66.9\pm1.2$ & $69.9\pm0.3$ \\
			GraphCL (\citeyear{contrastive_augmentation}) & $69.1\pm0.9$ & $70.2\pm0.4$ & $68.4\pm1.2$ & $71.2\pm0.6$ \\
			\hline
			GraphLoG (ours) & $\mathbf{71.2}\pm0.6$ & $\mathbf{70.8}\pm0.8$ & $\mathbf{69.5}\pm1.0$ & $\mathbf{72.9}\pm0.7$ \\
			\bottomrule[1.0pt]		
		\end{tabular}
		\vspace{-1.5mm}
	\end{spacing}
\end{table}


\begin{figure*}[t]
	\centering
	\includegraphics[width=0.9\textwidth]{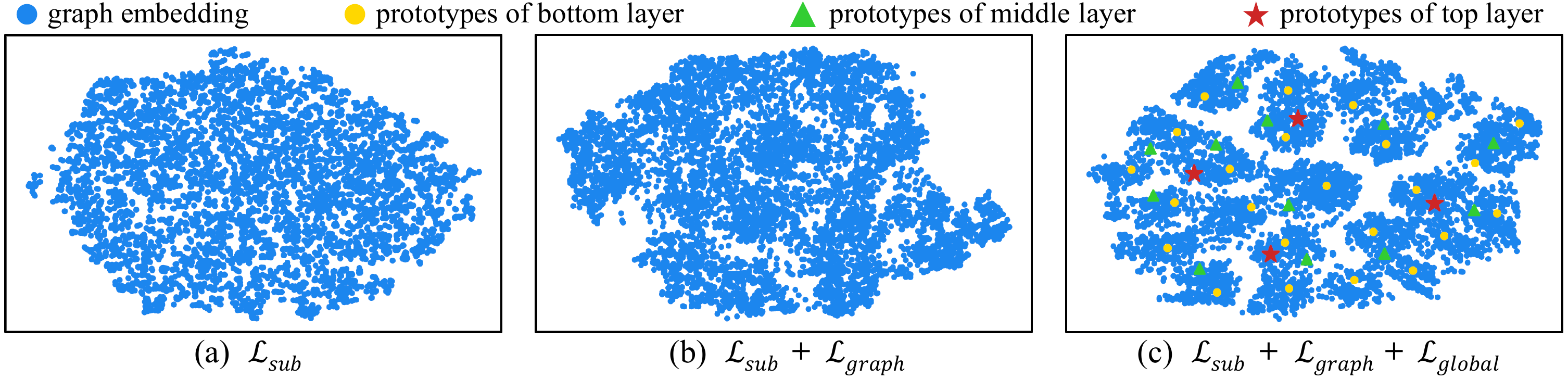}
	\vspace{-3.5mm}
	\caption{The t-SNE visualization on ZINC15 database (\emph{i.e.} the pre-training data set for chemistry domain).} 
	\label{fig_tsne}
\end{figure*}


\subsection{Experiments on Chemistry Domain} \label{sec5_2}

\textbf{Data sets.} For fair comparison, we use the same data sets as in \citet{pretraining_gnn}. In specific, a subset of ZINC15 database~\citep{zinc15} with 2 million unlabeled molecules is employed for self-supervised pre-training. Eight binary classification data sets in MoleculeNet~\citep{moleculenet} serve as downstream tasks, where the scaffold split scheme~\citep{scaffold_split} is used for data set split. 


\textbf{Results.} In Tab.~\ref{tab_mol}, we report the performance of the proposed GraphLoG method compared with other works, where `Random' denotes the GIN model with random initialization. Among all self-supervised learning strategies, our approach achieves the best performance on six of eight tasks, and a $2.1\%$ performance gain is obtained in terms of average ROC-AUC. We deem that this improvement over previous works is mainly from the global structure modeling in GraphLoG, which is not included in existing methods. 


\subsection{Experiments on Biology Domain} \label{sec5_3}

\textbf{Data sets.} For biology domain, following \citet{pretraining_gnn}, 395K unlabeled protein ego-networks are utilized for self-supervised pre-training. The downstream task is to predict 40 fine-grained biological functions of 8 species. 


\textbf{Results.} Tab.~\ref{tab_bio} reports the test ROC-AUC of various self-supervised learning techniques. It can be observed that the proposed GraphLoG method outperforms existing approaches with a clear margin, \emph{i.e.} a $1.7\%$ performance gain. This result illustrates that the proposed approach is able to learn effective graph representations that benefit the downstream task involving fine-grained classification.

In Tab.~\ref{tab_gnn}, we further compare GraphLoG with four existing methods under four GNN architectures (\emph{i.e.} GCN~\citep{gcn}, GraphSAGE~\citep{graphsage}, GAT~\citep{gat} and GIN~\citep{gin}). We can observe that GraphLoG outperforms the existing approaches on all configurations, and, compared to EdgePred, AttrMasking and ContextPred, it avoids the performance decrease relative to random initialization baseline on GAT. 


\begin{table}[t]
	\begin{spacing}{1.15}
		\centering
		\scriptsize
		\vspace{-7mm}
		\caption{Ablation study for different objective functions on downstream biological function prediction benchmark.} \label{tab_ablation}
		\vspace{0.5mm}
		\setlength{\tabcolsep}{5.5mm}
		\begin{tabular}{ccc|c}
			\toprule[1.0pt]
			$\mathcal{L}_{\textrm{sub}}$ & $\mathcal{L}_{\textrm{graph}}$ & $\mathcal{L}_{\textrm{global}}$ & ROC-AUC (\%) \\
			\hline
			\hline
			$\checkmark$ &  &   & $70.1\pm0.6$ \\
			& $\checkmark$  &  & $71.0\pm0.3$ \\
			&  & $\checkmark$ & $71.5\pm0.5$ \\
			$\checkmark$ & $\checkmark$ &  & $71.3\pm0.7$ \\
			$\checkmark$ &  & $\checkmark$ & $71.9\pm0.8$ \\
			& $\checkmark$ & $\checkmark$ & $72.2\pm0.4$ \\
			$\checkmark$ & $\checkmark$ & $\checkmark$ & $\mathbf{72.9}\pm0.7$ \\
			\bottomrule[1.0pt]
		\end{tabular}
	\end{spacing}
	\vspace{-1.5mm}
\end{table}


\subsection{Analysis} \label{sec5_4}

\textbf{Effect of different objective functions.} In Tab.~\ref{tab_ablation}, we analyze the effect of three objective functions on the biology domain, and we continue using the GIN depicted in Sec.~\ref{sec5_1} in this experiment. When each objective function is individually applied (1st, 2nd and 3rd row), the one for global-semantic learning performs best, which probably benefits from its exploration of the semantic structure of the data. Through simultaneously applying different objective functions, the full model (last row) achieves the best performance, which illustrates that the learning of local and global structure are complementary to each other. 


\begin{figure}[t]
	\centering
	\vspace{-3.5mm}
	\includegraphics[width=0.45\textwidth]{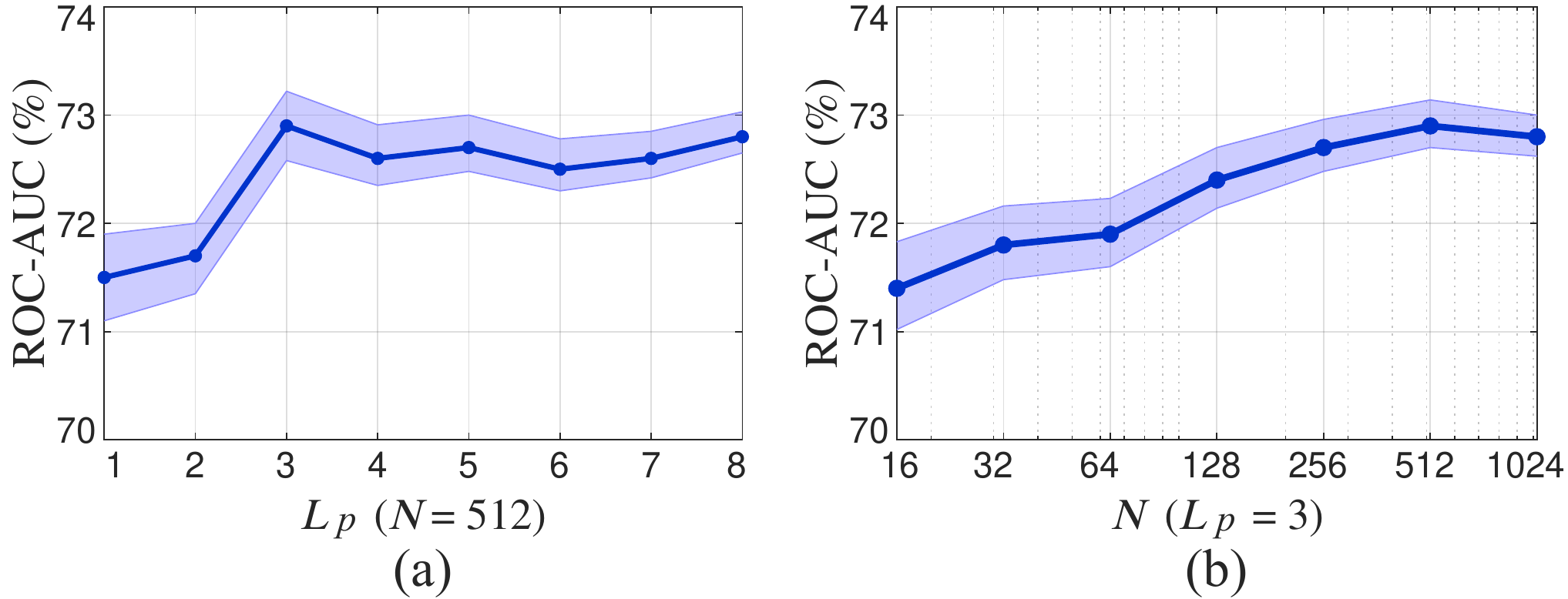}
	\vspace{-4.5mm}
	\caption{Sensitivity analysis of hierarchical prototypes' depth $L_p$ and batch size $N$. (All results are evaluated on biology domain.)} 
	\label{fig_sensitivity}
	\vspace{-2.5mm}
\end{figure}


\textbf{Sensitivity of hierarchical prototypes' depth $L_p$.} In this part, we discuss the selection of parameter $L_p$ which controls the number of discovered semantic hierarchies. In Fig.~\ref{fig_sensitivity}(a), we plot model's performance under different $L_p$ values. It can be observed that deeper hierarchical prototypes (\emph{i.e.} $L_p \geqslant 3$) achieve stable performance gain compared to the shallow ones (\emph{i.e.} $L_p \leqslant 2$). 


\textbf{Sensitivity of batch size $N$.} In this experiment, we evaluate the effect of the batch size $N$ on our method. Fig.~\ref{fig_sensitivity}(b) shows the test ROC-AUC on downstream task using different batch sizes. From the line chart, we can observe that large batch size (\emph{i.e.} $N \geqslant 256$) can promote the performance of GraphLoG. Under such condition, the sampled mini-batches can better represent the whole data set and thus derive more precise likelihood expectation in Eq.~\ref{eq14}. 


\textbf{Visualization.} In Fig.~\ref{fig_tsne}, we utilize the t-SNE~\citep{tsne} to visualize the graph embeddings and hierarchical prototypes on ZINC15 data set. Compared with only using the local constraints $\mathcal{L}_{\textrm{sub}}$ and $\mathcal{L}_{\textrm{graph}}$ (configurations (a) and (b)), more obvious feature separation is achieved after applying the global constraint $\mathcal{L}_{\textrm{global}}$ (configuration (c)), which illustrates its effectiveness on discovering the underlying global-semantic structure of the data. 


\vspace{-1mm}
\section{Conclusions and Future Work} \label{sec6}
\vspace{-0.5mm}

We design a unified framework called Local-instance and Global-semantic Learning (GraphLoG) for self-supervised graph representation learning, which models the structure of a set of unlabeled graphs both locally and globally. In this framework, we novelly propose to learn hierarchical prototypes upon graph embeddings to infer the global-semantic structure in graphs. Using the benchmark data sets from both chemistry and biology domains, we empirically verify our method's superior performance over state-of-the-art approaches on different GNN architectures. 

Our future works will include further improving the global structure learning technique, unifying pre-training and fine-tuning, and extending our framework to other domains such as sociology, physics and material science. 


\section*{Acknowledgements} \label{sec7}

This project was supported by the Natural Sciences and Engineering Research Council (NSERC) Discovery Grant, the
Canada CIFAR AI Chair Program, collaboration grants between Microsoft Research and Mila, Samsung Electronics Co., Ldt., Amazon Faculty Research Award, Tencent AI Lab Rhino-Bird Gift Fund and a NRC Collaborative R\&D Project (AI4D-CORE-08). This project was also partially funded by IVADO Fundamental Research Project grant PRF2019-3583139727. Bingbing Ni is supported by National Science Foundation of China (U20B2072, 61976137).

The authors would also like to thank Meng Qu, Shengchao Liu, Zhaocheng Zhu and Zuobai Zhang for providing constructive advices during this project, and also appreciate the Student Innovation Center of SJTU for providing GPUs.


\bibliography{reference}
\bibliographystyle{icml2021}


\newpage
\appendix
\onecolumn


\section{Theoretical Analysis} \label{supp_sec1}

\begin{mytheorem} \label{theorem1}
Given a mini-batch $\widetilde{\mathbf{G}}$ (batch size is $N$) randomly sampled from the data set $\mathbf{G}$ which contains $M$ graphs, the expected log-likelihood defined on this mini-batch, \emph{i.e.} $\widetilde{Q}(\theta, \mathbf{C}) = \mathbb{E}_{p(\widetilde{\mathbf{Z}} | \widetilde{\mathbf{G}}, \theta_{t-1}, \mathbf{C}_{t-1})} [\log p(\widetilde{\mathbf{G}}, \widetilde{\mathbf{Z}} | \theta, \mathbf{C})]$, is approximately proportional to the expected complete-data log-likelihood, \emph{i.e.} $Q(\theta, \mathbf{C}) = \mathbb{E}_{p(\mathbf{Z} | \mathbf{G}, \theta_{t-1}, \mathbf{C}_{t-1})} [\log p(\mathbf{G}, \mathbf{Z} | \theta, \mathbf{C})]$:
\begin{displaymath}
\widetilde{Q}(\theta, \mathbf{C}) \approx \frac{N}{M} \, Q(\theta, \mathbf{C}) .
\end{displaymath}
\end{mytheorem}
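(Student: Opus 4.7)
The plan is to exploit the i.i.d.\ assumption made in the main text (Eqs.~\ref{eq10} and \ref{eq11}) to reduce both $Q$ and $\widetilde{Q}$ to sums of per-graph contributions, and then observe that $\widetilde{Q}$ is, up to the scaling factor $N/M$, an unbiased Monte Carlo estimator of $Q$ under uniform mini-batch sampling.

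First, I would take the logarithm of the factorized complete-data likelihood (Eq.~\ref{eq10}) to turn $\log p(\mathbf{G},\mathbf{Z}\mid\theta,\mathbf{C})$ into $\sum_{m=1}^{M}\log p(\mathcal{G}_m,z_{\mathcal{G}_m}\mid\theta,\mathbf{C})$, and likewise use the factorized posterior (Eq.~\ref{eq11}) to split the expectation over $\mathbf{Z}$ into independent expectations over each $z_{\mathcal{G}_m}$. Defining the per-graph quantity
\[
q_m(\theta,\mathbf{C}) \;=\; \mathbb{E}_{p(z_{\mathcal{G}_m}\mid \mathcal{G}_m,\theta_{t-1},\mathbf{C}_{t-1})}\!\bigl[\log p(\mathcal{G}_m,z_{\mathcal{G}_m}\mid\theta,\mathbf{C})\bigr],
\]
this yields $Q(\theta,\mathbf{C}) = \sum_{m=1}^{M} q_m(\theta,\mathbf{C})$ and, by exactly the same factorization applied to the mini-batch, $\widetilde{Q}(\theta,\mathbf{C}) = \sum_{n=1}^{N} q_{\mathcal{G}_n}(\theta,\mathbf{C})$ where $\mathcal{G}_1,\dots,\mathcal{G}_N$ are the graphs in $\widetilde{\mathbf{G}}$.

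Next I would take the expectation of $\widetilde{Q}$ with respect to the uniform sampling of the mini-batch $\widetilde{\mathbf{G}}$ from $\mathbf{G}$. Since each $\mathcal{G}_n$ is drawn uniformly at random from $\mathbf{G}$, we have $\mathbb{E}_{\mathcal{G}_n}[q_{\mathcal{G}_n}] = \frac{1}{M}\sum_{m=1}^{M} q_m = \frac{1}{M}Q(\theta,\mathbf{C})$, so summing over the $N$ slots of the mini-batch gives $\mathbb{E}_{\widetilde{\mathbf{G}}}[\widetilde{Q}(\theta,\mathbf{C})] = \frac{N}{M}\,Q(\theta,\mathbf{C})$ exactly. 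The claimed $\approx$ in the theorem then follows by replacing this expectation with a single realization of $\widetilde{\mathbf{G}}$, an approximation that becomes tight as $N$ grows by standard law-of-large-numbers / concentration arguments applied to the bounded per-graph terms $q_{\mathcal{G}_n}$.

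The main obstacle I foresee is not the algebra, which is essentially a one-line rearrangement, but articulating precisely what is meant by $\approx$: strict equality holds only in expectation over the random mini-batch, while for any fixed draw there is a sampling fluctuation of order $\sqrt{N}$ around $\tfrac{N}{M}Q$. I would therefore frame the result either (i) as an identity in expectation over $\widetilde{\mathbf{G}}$, noting that standard concentration bounds control the deviation provided the per-graph contributions $q_m$ are uniformly bounded, or (ii) as a Monte Carlo approximation justified by the usual stochastic-EM/online-EM rationale already invoked in Section~\ref{sec3_2}. Either framing suffices to legitimize using $\widetilde{Q}$ as a proxy for $Q$ in the M-step, which is all that Proposition~\ref{prop1} downstream actually needs.
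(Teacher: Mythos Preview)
Your argument is correct and, if anything, slightly cleaner than the paper's. Both proofs reduce $Q$ and $\widetilde{Q}$ to sums of per-graph contributions and then argue that the mini-batch sum is, up to the factor $N/M$, a Monte Carlo approximation of the full sum. The execution differs in two respects. First, the paper does not keep the exact posterior expectation in its per-graph term as you do with $q_m$; instead it immediately replaces $\widetilde{Q}$ by the single-sample Monte Carlo estimate $\sum_{n}\log p(\mathcal{G}_n,\hat{z}_{\mathcal{G}_n}\mid\theta,\mathbf{C})$ with $\hat{z}_{\mathcal{G}_n}$ drawn from the posterior, mirroring the stochastic-EM sampling already used in the E-step. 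Second, rather than taking an expectation over uniform mini-batch draws from the finite set $\mathbf{G}$, the paper routes through the underlying data distribution $P_{\mathcal{G}}$: it identifies $\tfrac{1}{N}\sum_n(\cdot)$ with $\mathbb{E}_{\mathcal{G}\sim P_{\mathcal{G}}}[\cdot]$ and then re-expands this same expectation as $\tfrac{1}{M}\sum_m(\cdot)$, arguing that both $\widetilde{\mathbf{G}}$ and $\mathbf{G}$ are i.i.d.\ samples from $P_{\mathcal{G}}$. Your approach buys an exact identity $\mathbb{E}_{\widetilde{\mathbf{G}}}[\widetilde{Q}]=\tfrac{N}{M}Q$ before any approximation, isolating the single source of slack (sampling of $\widetilde{\mathbf{G}}$) and avoiding the somewhat circular step of passing through $P_{\mathcal{G}}$; the paper's route has the minor advantage of explicitly tying the argument to the $\hat{z}$ samples actually used in the algorithm, so that the same approximation appears in both Theorem~\ref{theorem1} and Eq.~\ref{eq14}.
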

\begin{proof}
For each graph $\mathcal{G}_n$ in the mini-batch, a latent variable $\hat{z}_{\mathcal{G}_n} \sim p(z_{\mathcal{G}_n} | \mathcal{G}_n, \theta_{t-1}, \mathbf{C}_{t-1})$ is sampled from the posterior distribution for Monte Carlo estimation, and the mini-batch log-likelihood can be estimated as follows:
\begin{displaymath}
\widetilde{Q}(\theta, \mathbf{C}) \approx \sum_{n=1}^{N} \log p(\mathcal{G}_n, \hat{z}_{\mathcal{G}_n} | \theta, \mathbf{C}) .
\end{displaymath}
Since the graphs in both mini-batch $\widetilde{\mathbf{G}}$ and data set $\mathbf{G}$ can be regarded as randomly sampled from the data distribution $P_{\mathcal{G}}$, we deduce as below:
\begin{displaymath}
\begin{split}
\widetilde{Q}(\theta, \mathbf{C}) & \approx N \cdot \frac{1}{N} \sum_{n=1}^{N} \log p(\mathcal{G}_n, \hat{z}_{\mathcal{G}_n} | \theta, \mathbf{C}) \\
& = N \cdot \mathbb{E}_{\mathcal{G} \sim P_{\mathcal{G}}} \big[ \log p(\mathcal{G}, z_{\mathcal{G}} | \theta, \mathbf{C}) \big] \\ 
& = \frac{N}{M} \cdot M \cdot \mathbb{E}_{\mathcal{G} \sim P_{\mathcal{G}}} \big[ \log p(\mathcal{G}, z_{\mathcal{G}} | \theta, \mathbf{C}) \big] \\
& = \frac{N}{M} \cdot \sum_{m=1}^{M} \log p(\mathcal{G}_m, \hat{z}_{\mathcal{G}_m} | \theta, \mathbf{C}) \\
& \approx \frac{N}{M} \cdot \mathbb{E}_{p(\mathbf{Z} | \mathbf{G}, \theta_{t-1}, \mathbf{C}_{t-1})} \big[ \log p(\mathbf{G}, \mathbf{Z} | \theta, \mathbf{C}) \big] \\
& = \frac{N}{M} \, Q(\theta, \mathbf{C}) .
\end{split}
\end{displaymath}
Here, for each graph $\mathcal{G}_m$ in data set $\mathbf{G}$, a latent variable $\hat{z}_{\mathcal{G}_m} \sim p(z_{\mathcal{G}_m} | \mathcal{G}_m, \theta_{t-1}, \mathbf{C}_{t-1})$ is sampled from the posterior distribution for Monte Carlo estimation.

\end{proof}

\begin{myprop} \label{prop2}
For each EM cycle, the model parameters $\theta$ and $\mathbf{C}$ are updated in such a way that increases the marginal likelihood function $p(\mathbf{G} | \theta, \mathbf{C})$, unless a local maximum is reached on the mini-batch log-likelihood function $\widetilde{Q}(\theta, \mathbf{C})$.
\end{myprop}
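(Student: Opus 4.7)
The plan is to chain together the classical monotonicity identity of EM with the mini-batch approximation already established in Theorem~\ref{theorem1}. The overall strategy is: (i) show that any increase in the full-data expected complete-data log-likelihood $Q(\theta,\mathbf{C})$ implies an increase in the marginal likelihood $p(\mathbf{G}\mid\theta,\mathbf{C})$; (ii) transfer this to the mini-batch surrogate $\widetilde{Q}(\theta,\mathbf{C})$ using the proportionality $\widetilde{Q}\approx(N/M)\,Q$; (iii) identify the degenerate case in which no improvement is possible with a local maximum of $\widetilde{Q}$.

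First, I would reproduce the standard EM decomposition. For any variational distribution $q(\mathbf{Z})$,
\begin{equation*}
\log p(\mathbf{G}\mid\theta,\mathbf{C}) \;=\; \mathbb{E}_{q}\bigl[\log p(\mathbf{G},\mathbf{Z}\mid\theta,\mathbf{C})\bigr] - \mathbb{E}_{q}[\log q(\mathbf{Z})] + \mathrm{KL}\bigl(q(\mathbf{Z})\,\big\|\,p(\mathbf{Z}\mid\mathbf{G},\theta,\mathbf{C})\bigr).
\end{equation*}
Setting $q(\mathbf{Z})=p(\mathbf{Z}\mid\mathbf{G},\theta_{t-1},\mathbf{C}_{t-1})$ and subtracting the identity evaluated at the old parameters $(\theta_{t-1},\mathbf{C}_{t-1})$, for which the KL term vanishes, gives
\begin{equation*}
\log p(\mathbf{G}\mid\theta,\mathbf{C}) - \log p(\mathbf{G}\mid\theta_{t-1},\mathbf{C}_{t-1}) \;\geqslant\; Q(\theta,\mathbf{C}) - Q(\theta_{t-1},\mathbf{C}_{t-1}),
\end{equation*}
because the resulting KL divergence is nonnegative and the entropy term $\mathbb{E}_{q}[\log q(\mathbf{Z})]$ does not depend on $(\theta,\mathbf{C})$. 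This is the classical EM monotonicity lemma and establishes step~(i).

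Next, I would invoke Theorem~\ref{theorem1} to translate the guarantee from $Q$ to $\widetilde{Q}$. Since $\widetilde{Q}(\theta,\mathbf{C})\approx(N/M)\,Q(\theta,\mathbf{C})$ with $N/M>0$, the two surrogates share (approximately) the same ascent directions, so a gradient-based M-step that strictly increases $\widetilde{Q}$ at iteration $t$ also increases $Q$. Combined with the inequality above, this yields
\begin{equation*}
\log p(\mathbf{G}\mid\theta_{t},\mathbf{C}_{t}) \;\geqslant\; \log p(\mathbf{G}\mid\theta_{t-1},\mathbf{C}_{t-1}) + \tfrac{M}{N}\bigl[\widetilde{Q}(\theta_{t},\mathbf{C}_{t}) - \widetilde{Q}(\theta_{t-1},\mathbf{C}_{t-1})\bigr],
\end{equation*}
so the marginal likelihood is nondecreasing whenever $\widetilde{Q}$ is increased in the M-step. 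Finally, the only way that the update can fail to increase the marginal likelihood is if $\nabla_{\theta,\mathbf{C}}\widetilde{Q}=0$ at $(\theta_{t-1},\mathbf{C}_{t-1})$, i.e.\ a local maximum of $\widetilde{Q}$ has been reached; this is the escape clause in the proposition.

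The main obstacle is the ``approximately'' in Theorem~\ref{theorem1}: the relation $\widetilde{Q}\approx(N/M)\,Q$ rests on both a Monte Carlo draw of the latent variables in the E-step and the mini-batch approximation to the population expectation, so strict monotonicity holds only in expectation over these randomizations rather than on every single cycle. I would address this by being explicit that the guarantee is, like other online/stochastic EM results, an expected-ascent statement: taking expectation over the mini-batch sampling and the Monte Carlo draw recovers the proportionality with equality, at which point the EM inequality applies verbatim and gives the stated increase of $p(\mathbf{G}\mid\theta,\mathbf{C})$ up to the local-maximum exception.
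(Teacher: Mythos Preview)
Your proposal is correct and uses the same core ingredients as the paper: the ELBO decomposition $\log p = \mathcal{L}(q,\theta,\mathbf{C}) + \mathrm{KL}(q\,\|\,p)$, the observation that setting $q$ to the old posterior makes the KL vanish so that $\mathcal{L}=\widetilde{Q}+\text{(entropy constant)}$, nonnegativity of the new KL term, and the mini-batch/full-data proportionality from Theorem~\ref{theorem1}.

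The only difference is the \emph{order} in which you apply the two pieces. The paper carries out the entire EM monotonicity argument on the mini-batch quantities $\log p(\widetilde{\mathbf{G}}\mid\theta,\mathbf{C})$ and $\widetilde{Q}$ first, obtaining $\Delta\log p(\widetilde{\mathbf{G}}\mid\theta,\mathbf{C})>\Delta\widetilde{Q}$, and only at the very end invokes a Theorem~\ref{theorem1}-style argument to scale up via $\log p(\mathbf{G}\mid\theta,\mathbf{C})\approx(M/N)\log p(\widetilde{\mathbf{G}}\mid\theta,\mathbf{C})$. You instead run the EM inequality on the full data to get $\Delta\log p(\mathbf{G}\mid\theta,\mathbf{C})\geqslant\Delta Q$, and then scale $Q$ down to $\widetilde{Q}$ via Theorem~\ref{theorem1}. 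Under the paper's i.i.d.\ factorization (so that the mini-batch posterior coincides with the restriction of the full-data posterior), the two routes are equivalent and yield the same final bound $\Delta\log p(\mathbf{G}\mid\theta,\mathbf{C})\gtrsim(M/N)\,\Delta\widetilde{Q}$. Your explicit acknowledgement that the guarantee is an expected-ascent statement (over the mini-batch draw and the Monte Carlo sample in the E-step) is a fair reading of the ``$\approx$'' that both proofs rely on.
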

\begin{proof}
We verify this claim from the perspective of variational inference. For a mini-batch $\widetilde{\mathbf{G}}$, we suppose that $q(\widetilde{\mathbf{Z}})$ is a variational distribution over the true posterior $p(\widetilde{\mathbf{Z}} | \widetilde{\mathbf{G}}, \theta, \mathbf{C})$. For any choice of $q(\widetilde{\mathbf{Z}})$, the following decomposition of the marginal log-likelihood $\log p(\widetilde{\mathbf{G}} | \theta, \mathbf{C})$ holds:
\begin{displaymath}
\log p(\widetilde{\mathbf{G}} | \theta, \mathbf{C}) = \mathcal{L}(q, \theta, \mathbf{C}) + \mathrm{KL}(q || p) ,
\end{displaymath}
\begin{displaymath}
\mathcal{L}(q, \theta, \mathbf{C}) = \mathbb{E}_{q(\widetilde{\mathbf{Z}})} \big[ \log p(\widetilde{\mathbf{G}}, \widetilde{\mathbf{Z}} | \theta, \mathbf{C}) - \log q(\widetilde{\mathbf{Z}}) \big] ,
\end{displaymath}
\begin{displaymath}
\mathrm{KL}(q || p) = \mathbb{E}_{q(\widetilde{\mathbf{Z}})} \bigg[ \log  \bigg( \frac{q(\widetilde{\mathbf{Z}})}{p(\widetilde{\mathbf{Z}} | \widetilde{\mathbf{G}}, \theta, \mathbf{C})} \bigg) \bigg] ,
\end{displaymath}
where $\mathcal{L}(q, \theta, \mathbf{C})$ is the evidence lower bound (ELBO) of marginal log-likelihood function, \emph{i.e.} $\mathcal{L}(q, \theta, \mathbf{C}) \leqslant \log p(\widetilde{\mathbf{G}} | \theta, \mathbf{C})$ (equality holds when $q(\widetilde{\mathbf{Z}}) = p(\widetilde{\mathbf{Z}} | \widetilde{\mathbf{G}}, \theta, \mathbf{C})$). 

In the E-step, we set the variational distribution equal to the posterior distribution with respect to the current model parameters, \emph{i.e.} $q(\widetilde{\mathbf{Z}}) = p(\widetilde{\mathbf{Z}} | \widetilde{\mathbf{G}}, \theta_{t-1}, \mathbf{C}_{t-1})$, such that the KL divergence term vanishes, and the ELBO equals to the marginal log-likelihood $\log p(\widetilde{\mathbf{G}} | \theta_{t-1}, \mathbf{C}_{t-1})$. If we substitute $q(\widetilde{\mathbf{Z}})$ with $p(\widetilde{\mathbf{Z}} | \widetilde{\mathbf{G}}, \theta_{t-1}, \mathbf{C}_{t-1})$ in the ELBO term, we see that, after the E-step, it takes the following form:
\begin{displaymath}
\begin{split}
\mathcal{L}(q, \theta, \mathbf{C}) & = \mathbb{E}_{p(\widetilde{\mathbf{Z}} | \widetilde{\mathbf{G}}, \theta_{t-1}, \mathbf{C}_{t-1})} \big[ \log p(\widetilde{\mathbf{G}}, \widetilde{\mathbf{Z}} | \theta, \mathbf{C}) \big] - \mathbb{E}_{q(\widetilde{\mathbf{Z}})} \big[ \log q(\widetilde{\mathbf{Z}}) \big] \\
& = \widetilde{Q}(\theta, \mathbf{C}) + \mathcal{H} \big( q(\widetilde{\mathbf{Z}}) \big) ,
\end{split}
\end{displaymath}
where $\mathcal{H}$ denotes the entropy function. 

In the M-step, the variational distribution $q(\widetilde{\mathbf{Z}})$ is fixed, and thus the ELBO term equals to the expected mini-batch log-likelihood $\widetilde{Q}(\theta, \mathbf{C})$ plus a constant:
\begin{displaymath}
\mathcal{L}(q, \theta, \mathbf{C}) = \widetilde{Q}(\theta, \mathbf{C}) + \mathrm{const} .
\end{displaymath}
In this step, we seek to maximize $\widetilde{Q}(\theta, \mathbf{C})$ with respect to model parameters $\theta$ and $\mathbf{C}$, which will increase the value of $\mathcal{L}(q, \theta, \mathbf{C})$ unless a local maximum is reached on $\widetilde{Q}(\theta, \mathbf{C})$. Except for the local maximum case, there will be new values of $\theta$ and $\mathbf{C}$, denoted as $\theta_t$ and $\mathbf{C}_t$, which gives out that:
\begin{displaymath}
\mathrm{KL}(q || p) = \mathrm{KL} \big( p(\widetilde{\mathbf{Z}} | \widetilde{\mathbf{G}}, \theta_{t-1}, \mathbf{C}_{t-1}) \, || \, p(\widetilde{\mathbf{Z}} | \widetilde{\mathbf{G}}, \theta_{t}, \mathbf{C}_{t}) \big) > 0 .
\end{displaymath}
Denoting the increase of the ELBO term after the M-step as $\Delta \mathcal{L}(q, \theta, \mathbf{C}) = \mathcal{L}(q, \theta_{t}, \mathbf{C}_{t}) - \mathcal{L}(q, \theta_{t-1}, \mathbf{C}_{t-1}) > 0$, the increase of the marginal log-likelihood satisfies that:
\begin{displaymath}
\begin{split}
\Delta \log p(\widetilde{\mathbf{G}} | \theta, \mathbf{C}) & = \log p(\widetilde{\mathbf{G}} | \theta_{t}, \mathbf{C}_{t}) - \log p(\widetilde{\mathbf{G}} | \theta_{t-1}, \mathbf{C}_{t-1}) \\
& = \mathcal{L}(q, \theta_{t}, \mathbf{C}_{t}) + \mathrm{KL}(q || p) - \mathcal{L}(q, \theta_{t-1}, \mathbf{C}_{t-1}) \\
& > \Delta \mathcal{L}(q, \theta, \mathbf{C}) ,
\end{split}
\end{displaymath}
where the KL term of $\log p(\widetilde{\mathbf{G}} | \theta_{t-1}, \mathbf{C}_{t-1})$ vanishes due to the operation in the E-step. Similar as the deduction in Theorem~\ref{theorem1}, the complete-data log-likelihood $\log p(\mathbf{G} | \theta, \mathbf{C})$ and the mini-batch log-likelihood $\log p(\widetilde{\mathbf{G}} | \theta, \mathbf{C})$ have the following relation:
\begin{displaymath}
\begin{split}
\log p(\mathbf{G} | \theta, \mathbf{C}) & = \sum_{m=1}^{M} \log p(\mathcal{G}_m | \theta, \mathbf{C}) \\
& = M \cdot \mathbb{E}_{\mathcal{G} \sim P_{\mathcal{G}}} \log p(\mathcal{G} | \theta, \mathbf{C}) \\
& = \frac{M}{N} \cdot \sum_{n=1}^{N} \log p(\mathcal{G}_n | \theta, \mathbf{C}) \\
& = \frac{M}{N} \log p(\widetilde{\mathbf{G}} | \theta, \mathbf{C}) .
\end{split}
\end{displaymath}
From this relation, we can derive that:
\begin{displaymath}
\Delta \log p(\mathbf{G} | \theta, \mathbf{C}) = \frac{M}{N} \Delta \log p(\widetilde{\mathbf{G}} | \theta, \mathbf{C}) > \frac{M}{N} \Delta \mathcal{L}(q, \theta, \mathbf{C}) > 0 ,
\end{displaymath}
which illustrates that the EM cycle in our approach is able to increase the complete-data marginal likelihood $p(\mathbf{G} | \theta, \mathbf{C})$ except that a local maximum is reached on $\widetilde{Q}(\theta, \mathbf{C})$. 

\end{proof}


\section{More Implementation Details} \label{supp_sec2}

\textbf{Attribute masking scheme.} For the chemistry domain, given a molecular graph, we randomly mask the attributes of 30\% nodes (\emph{i.e.} atoms) in it to obtain its correlated counterpart. Specifically, we add an extra dimension to the feature of atom type and atom chirality to indicate masked attribute, and the input features of all masked atoms are set to these extra dimensions. 

For the biology domain, given a protein ego-network, we randomly mask the attributes of 30\% edges in it to derive its correlated counterpart. In specific, we use an extra dimension to indicate masked attribute. For an edge to be masked, the weight of its extra dimension is set as 1, and the weights of all other dimensions are set as 0. 

\textbf{GNN architecture.} All the GNNs in our experiments (\emph{i.e.} GCN~\citep{gcn}, GraphSAGE~\citep{graphsage}, GAT~\citep{gat} and GIN~\citep{gin}) are with 5 layers, 300-dimensional hidden units and a mean pooling readout function. In addition, two attention heads are employed in each layer of the GAT model. 


\end{document}